%% file: main.tex
\documentclass{article}

\usepackage{microtype}
\usepackage{graphicx}
\usepackage{subcaption}
\usepackage{booktabs} % for professional tables

\usepackage{hyperref}

\usepackage[preprint]{icml2026}

\usepackage{amsmath}
\usepackage{amssymb}
\usepackage{mathtools}
\usepackage{amsthm}
\usepackage{makecell}

\usepackage[capitalize,noabbrev]{cleveref}

\theoremstyle{plain}
\newtheorem{theorem}{Theorem}[section]
\newtheorem{proposition}[theorem]{Proposition}

\newtheorem{corollary}[theorem]{Corollary}
\theoremstyle{definition}
\newtheorem{definition}[theorem]{Definition}

\theoremstyle{remark}

\usepackage{pifont}
\usepackage{subcaption}
\usepackage{soul}

\newcommand{\ie}{i.e.\ }
\newcommand{\eg}{e.g.\ }
\usepackage{multirow}
\newcommand{\cmark}{\ding{51}}%
\newcommand{\xmark}{\ding{55}}%
\newcommand{\dscache}{DSCache}

\usepackage{tikz}

\newcommand{\squared}[1]{%
\tikz[baseline=(char.base)]{
  \node[draw, rectangle, inner sep=1.0pt] (char) {\footnotesize $#1$};
}}
\newcommand{\circled}[1]{%
\tikz[baseline=(char.base)]{
  \node[circle, draw, inner sep=-0.3pt] (char) {\footnotesize $#1$};
}}

\usepackage[textsize=tiny]{todonotes}

\icmltitlerunning{Decouple and Cache: KV Cache Construction for Streaming Video Understanding}

\begin{document}

\twocolumn[
  \icmltitle{Decouple and Cache: KV Cache Construction for Streaming \\ Video Understanding}
   
  % It is OKAY to include author information, even for blind submissions: the
  % style file will automatically remove it for you unless you've provided
  % the [accepted] option to the icml2026 package.

  % List of affiliations: The first argument should be a (short) identifier you
  % will use later to specify author affiliations Academic affiliations
  % should list Department, University, City, Region, Country Industry
  % affiliations should list Company, City, Region, Country

  % You can specify symbols, otherwise they are numbered in order. Ideally, you
  % should not use this facility. Affiliations will be numbered in order of
  % appearance and this is the preferred way.
  \icmlsetsymbol{equal}{*}

  \begin{icmlauthorlist}
    \icmlauthor{Zhanzhong Pang}{yyy}
    \icmlauthor{Dibyadip Chatterjee}{yyy}
    \icmlauthor{Fadime Sener}{}
    \icmlauthor{Angela Yao}{yyy} \\
    \vspace{2mm}
    {Code: \ \url{https://github.com/pangzhan27/DSCache}}
  \end{icmlauthorlist}

  \icmlaffiliation{yyy}{National University of Singapore}
  \icmlcorrespondingauthor{Zhanzhong Pang}{pang@comp.nus.edu.sg}

  % You may provide any keywords that you find helpful for describing your
  % paper; these are used to populate the "keywords" metadata in the PDF but
  % will not be shown in the document
  \icmlkeywords{Streaming Video Understanding, KV cache}

  \vskip 0.3in
]

% this must go after the closing bracket ] following \twocolumn[ ...

% This command actually creates the footnote in the first column listing the
% affiliations and the copyright notice. The command takes one argument, which
% is text to display at the start of the footnote. The \icmlEqualContribution
% command is standard text for equal contribution. Remove it (just {}) if you
% do not need this facility.

% Use ONE of the following lines. DO NOT remove the command.
% If you have no special notice, KEEP empty braces:
\printAffiliationsAndNotice{}  % no special notice (required even if empty)
% Or, if applicable, use the standard equal contribution text:
% \printAffiliationsAndNotice{\icmlEqualContribution}

%%%%%%%%%%%%%%%%%%%%%%%%%%%%%%%%%%%%%%%%%%%%%%%%%%%%%%%%%%

\input{sec/abstract}
\input{sec/intro}
\input{sec/relate}
\input{sec/pre}
\input{sec/method}
\input{sec/exp}

\input{sec/concl}

\clearpage

%\clearpage
\bibliography{reference}
\bibliographystyle{icml2026}

%%%%%%%%%%%%%%%%%%%%%%%%%%%%%%%%%%%%%%%%%%%%%%%%%%%%%%%%%%%%%%%%%%%%%%%%%%%%%%%
%%%%%%%%%%%%%%%%%%%%%%%%%%%%%%%%%%%%%%%%%%%%%%%%%%%%%%%%%%%%%%%%%%%%%%%%%%%%%%%
% APPENDIX
%%%%%%%%%%%%%%%%%%%%%%%%%%%%%%%%%%%%%%%%%%%%%%%%%%%%%%%%%%%%%%%%%%%%%%%%%%%%%%%
%%%%%%%%%%%%%%%%%%%%%%%%%%%%%%%%%%%%%%%%%%%%%%%%%%%%%%%%%%%%%%%%%%%%%%%%%%%%%%%
\newpage
\input{sec/suppl}

\end{document}

%% file: sec/abstract.tex
\begin{abstract}
  Streaming video understanding requires processing unbounded video streams with limited memory and computation, posing two key challenges. First, continuously constructing new and evicting old key-value~(KV) caches is required for unbounded streams. Secondly, due to the high cost of collecting and training on unbounded streams, models must learn from short sequences while generalizing to long streams.
  Existing streaming VideoVLLMs fail to scale to unbounded video streams or focus on cache reuse strategies, leaving the impact of cache construction underexplored. In this paper, we propose Decoupled Streaming Cache~(DSCache), a training-free cache construction mechanism that adapts pretrained offline models to streaming settings. \dscache~maintains a cumulative past KV cache while constructing a separate instant cache on-demand, decoupled from past caches to preserve the informativeness of recent inputs.
  To enable position extrapolation beyond the training length, DSCache further incorporates a position-agnostic encoding strategy, ensuring KV caches to support unseen positions and preventing position overflow.
  Experiments on Streaming Video QA benchmarks demonstrate DSCache's state-of-the-art performance, with an average 2.5\% accuracy gains over prior methods.
  %Code is available at \url{https://github.com/pangzhan27/DSCache}.
\end{abstract}

%% file: sec/intro.tex
\section{Introduction}

Streaming video understanding is central to perception in robotics, surveillance, and autonomous driving~\cite{supancic2017tracking, muhammad2019deepres,wei2022learning,shao2024lmdrive,pang2025context}. However, it remains highly challenging, as models must process continuous, unbounded video streams and make real-time predictions from incomplete observations. Many deployed applications also operate under strict memory and computational constraints. Existing streaming VideoLLMs~\cite{zhang2024flash, yao2025timechat, fu2025vispeak, li2025lion} adopt proactive and asynchronous LLM invocation designs, but most remain quasi-streaming, as they process fixed, pre-defined segments and lack explicit designs that scale to unbounded streams. 

\begin{figure*}[t]
 \centering
 \includegraphics[width=0.9\linewidth]{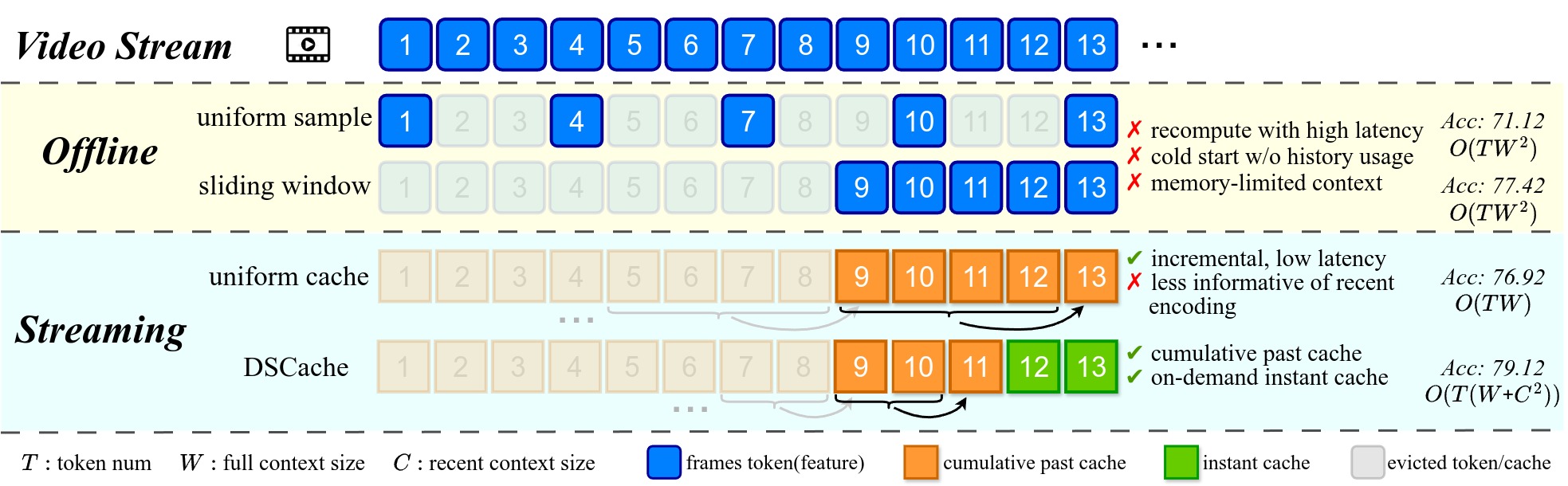}
 \vspace{-0mm}
 \caption{Illustration of \dscache~vs. existing methods. Inference over a video stream of length $T$ with a full context of size $W$ and a separate recent context of size $C \ll W$. Offline models sample frames from the past, whereas the streaming baseline uniformly updates the KV cache while evicting older ones. \dscache~decouples the cumulative past and instant cache constructions to better preserve fine-grained recent information. Accuracy is reported on StreamingBench using Llava-OV-7B. 
 }
 \label{fig:intro}
 \vspace{-3mm}
\end{figure*}

Real unbounded streaming introduces two major challenges. First, streaming inference requires maintaining KV caches to preserve past context for efficient prediction. Under unbounded streams, these caches must  be continuously updated and evicted to meet resource limits. Existing methods~\cite{li2024snapkv, xu2025streamingvlm,ning2025livevlm} adopt \textit{uniform KV cache construction}, updating caches in the same manner for each incoming input, with older caches evicted, compressed, or offloaded and later retrieved~\cite{zhang2023h2o, di2025streaming, kim2025infinipot}. While these approaches prioritize efficient cache utilization, 
the quality and reliability of constructed
caches remain underexplored.
Second, models are typically trained on short sequences due to the high cost of collecting and training on unbounded data, yet are expected to generalize to long streams at inference time~\cite{press2021train, yang2025streammem}. To align inference behavior with training, prior works~\cite{xiao2023efficient, xu2025streamingvlm} introduced attention sinks to stabilize streaming inference after cache eviction, and addressed position extrapolation beyond the training length, relying on empirical strategies and lacking formal analysis.

In this paper, we adapt pretrained offline models for Streaming Video Question Answering~(StreamingVQA) in a training-free setting, and investigate key factors that limit such adaptation. As shown in~\cref{fig:intro}, a naive approach directly processes sampled frames at each time step~\cite{yao2025timechat, pang2026discriminative}, incurring redundant recomputation, higher latency, and limited temporal context. 
Alternatively, offline models can maintain a uniform KV cache~\cite{xiao2023efficient}, where each new cache is updated in the same manner based on past caches. This avoids costly recomputation and enables long-context utilization. However, under uniform KV cache construction, we observe a cumulative effect in which new caches are built on past caches that retain residual information from earlier, evicted ones. 
This issue arises in the presence of cache eviction and depends on how caches are constructed and encoded.
Since pretrained models exhibit strong recency biases~\cite{liu2023lost,tian2025identifying}, this indirect accumulation causes new caches to retain excessive historical information, reducing their informativeness for current inputs~(see Supplementary for further analysis). In addition, as inputs are processed incrementally, token positions eventually exceeds those seen during training, resulting in out-of-distribution and overflow. Cache eviction further introduces discontinuities in positions between visual and system tokens, violating the continuity assumption made during training. 
Enabling KV caching at out-of-distribution positions while maintaining continuity in positions is critical for unbounded streaming. 

Based on these observations, we propose a training-free, cache construction mechanism, Decoupled Streaming Cache~(\dscache), to mitigate the cumulative effect during cache construction, and to support unbounded streams by handling discontinuous and out-of-distribution positions beyond those seen during training. Specifically, \dscache~maintains a cumulative past KV cache while constructing a separate instant cache on demand for incoming inputs. Using a feature buffer, it encodes recent inputs without cumulative interference from the past KV cache, thereby preserving their informativeness. We further integrate \dscache~with a position-agnostic encoding strategy. Unlike standard position-encoded KV caches, this strategy stores caches without positional information and assigns them only at usage time, allowing models to reinitialize positions, thereby handling positions that are discontinuous or beyond training lengths, preventing position overflow. We formally prove that, for RoPE-based LLMs, position-agnostic encoding with reassigned positions produces outputs equivalent to standard positional encoding. This guarantees correctness while supporting flexible cache operations, \eg partitioning, disassembling, and selective reuse.

We evaluate \dscache~on StreamingVQA, which requires real-time understanding of continuous video streams to answer user queries. \dscache~ achieves state-of-the-art performance, with an average 2.5\% accuracy improvement over existing methods on standard benchmarks~\cite{lin2024streamingbench, niu2025ovo}, while enabling stable memory utilization and low-latency inference. Summarizing our contributions, 
\vspace{-3mm}
\begin{itemize}
\setlength\itemsep{0pt}
\setlength\parskip{1pt}
\item We propose a training-free method that adapts offline pretrained models to unbounded streaming.
\item We reveal a cumulative effect in uniform streaming KV caches, where residuals from earlier caches degrade the quality of encoding recent inputs. 
\item We introduce \dscache, separating cumulative and instant caches to preserve informativeness of recent inputs while maintaining accumulated context.
\item We present a position-agnostic encoding strategy with a formal equivalence proof, enabling position extrapolation beyond training lengths and supporting flexible cache operations.
\end{itemize}

%% file: sec/relate.tex
\section{Related works}
\label{sec:related}
\textbf{Streaming VideoLLMs} extend large multi-modal models to process continuous video streams and support timely interactions and responses. VideoLLM-online~\cite{chen2024videollm} first introduced proactive online video interaction by invoking LLMs at every step. Subsequent works, StreamMind~\cite{ding2025streammind} and StreamBridge~\cite{wang2025streambridge}, improved efficiency by selectively triggering the LLMs only on relevant events. 
Dispider~\cite{qian2025dispider} and ViSpeak~\cite{fu2025vispeak} further enabled asynchronous processing by decoupling perception and response generation.
Complementary works, such as Timechat-online~\cite{yao2025timechat}, improved efficiency by reducing visual redundancy through token filtering~\cite{li2025lion}, compression~\cite{zhang2024flash}, or adaptive selection~\cite{wu2024videollm}.
These models typically emphasize responsiveness and efficiency, and are evaluated under quasi-streaming setup, processing discrete segments without scalable KV cache design. In contrast, we focus on unbounded streaming inference with continuous KV cache management.

\textbf{Streaming Inference} shifts the focus from model-level behavior to inference length generalization and cache management. 
Early works, such as StreamingLLM~\cite{xiao2023efficient} and LM-Infinite~\cite{han2024lm}, investigated attention bottlenecks and provided theoretical insights for infinite-length inference without finetuning.
LongLoRA~\cite{chen2023longlora} adapted LLMs to long sequences via sparse local attention and LoRA fine-tuning. StreamingVLM~\cite{xu2025streamingvlm} extended StreamingLLM to video, reusing KV caches of sinks, visual and text tokens. To preserve long-range dependencies and reduce redundancy, recent works focus on improving KV cache management.
ProVideLLM~\cite{chatterjee2025memory} introduced an interleaved multimodal cache, using verbalized text caches to retain long-range visual information. ReKV~\cite{di2025streaming} stored processed KV caches in RAM and dynamically retrieving query-relevant entries. Other training-free approaches applied cache pruning or compression based on user queries~\cite{ning2025livevlm} or attention scores~\cite{kim2025infinipot,yang2025streammem} to improve efficiency.
However, existing methods assume a standard, uniform KV cache construction, and ignore how the construction quality affects downstream performance. Our method fills the gap by focusing on cache construction, enhancing cache quality to improve inference performance.

%% file: sec/pre.tex
\section{Preliminaries} 
\label{sec:pre}
We consider unbounded streaming inference, where a model incrementally processes video streams and answers user queries solely based on observed information under resource constraints. Since collecting data and training in this unbounded regime is costly, we adopt a \textit{training-free} setup that adapts a pretrained offline model for streaming inference. To avoid redundant computation, the model maintains a compact key-value cache~(KV cache) that summarizes the past context, rather than recomputing over sliding windows at each time step.

\vspace{-1mm}
\subsection{Streaming Inference Scheme}
At each time step $t$, the model receives current input tokens $X_{t} = \{x_{i+l_P}\}_{i=0}^{l_X-1}$, which include newly arriving frame tokens and any posed question tokens, with the past KV cache $\mathcal{P} = \{\mathcal{P}^{(r)}\}_{r=0}^{N-1}, \ |\mathcal{P}^{(r)}| = {l_P}$. Here, $N$ denotes the number of transformer layers, $l_P$ the total history length processed up to $t$, and $l_X$ the input sequence length. 
Due to memory constraints, the past KV cache is truncated to a fixed context window of length $l_W \le l_P$. In addition, attention sinks~\cite{xiao2023efficient}, where early tokens disproportionately attract high attention, can dominate the model’s focus in streaming inference. To mitigate their effect, the KV cache for the first $l_A$ tokens is maintained separately as $\mathcal{C}_a =\{\mathcal{C}_a^{(r)}\}_{r=0}^{N-1} = \{\mathcal{P}^{(r)}_{0:l_A}\}_{r=0}^{N-1}$. 
The KV cache used at time $t$ is then $\mathcal{C}_t = \{ [\mathcal{C}_a^{(r)}, \mathcal{P}^{(r)}_{l_P-l_W:l_P}] \}_{r=0}^{N-1}$, where $[ \cdot ]$ denotes concatenation. Slicing and concatenation are applied independently to the key and value caches at each layer along the sequence dimension.

\textbf{Stream Encoding~(Prefilling).} The model encodes input tokens while updating the corresponding KV cache, typically serving as a prefilling stage when the input includes a user question. Let $Z_t^{(r)}$ denote the input to the $r^\text{th}$ transformer layer with $Z_t^{(0)} = X_t$. For higher layers, the input is computed as $Z_t^{(r+1)} = g_r(Z_t^{(r)}, \mathcal{C}_t^{(r)})$ where $g_r$ denotes the computation of the $r^{\text{th}}$ layer, and $\mathcal{C}_t^{(r)} = (K_t^{(r)}, V_t^{(r)})$ is the KV cache at that layer $r$. During attention calculation, queries and keys are encoded with positional information. The resulting keys and the corresponding values are concatenated with the reused past KV cache $\mathcal{C}_t^{(r)}$, forming the updated KV cache for the layer. 
\begin{align}
Q_{t+1}^{(r)} &= f(W_q^{(r)} Z_t^{(r)}, l_P), 
K_{t+1}^{(r)} = [K_t^{(r)}, f(W_k^{(r)} Z_t^{(r)},l_P)],
\nonumber \\
V_{t+1}^{(r)} &= [V_t^{(r)}, W_v^{(r)} Z_t^{(r)}]
\label{eq:pre}
\end{align}
where $f(\cdot \ , m)$ applies positional encoding using contiguous position IDs starting from $m$. $W_q^{(r)}$, $W_k^{(r)}$, and $W_v^{(r)}$ are the parameters of the $r^\text{th}$ layer. 

After encoding, the past KV cache $\mathcal{P}$ is updated to length $l_P + l_X$ to include the newly added entries. At the next time step $t+1$, the reused cache $\mathcal{C}_{t+1}$ is constructed by concatenating the attention sinks with the most recent $l_W$ tokens from the updated past cache, evicting older entries as needed, yielding $\mathcal{C}_{t+1} = \{[\mathcal{C}_a^{(r)}, \mathcal{P}^{(r)}_{l_P-l_W+ l_X:l_P+l_X}]\}_{r=0}^{N-1}$.

\textbf{Decoding.} When the input includes a question, autoregressive decoding is performed using the KV cache $\mathcal{C}_{t+1}$ built incrementally in encoding stage to generate responses.

\begin{figure*}[thb]
\centering
\includegraphics[width=1.0\linewidth]{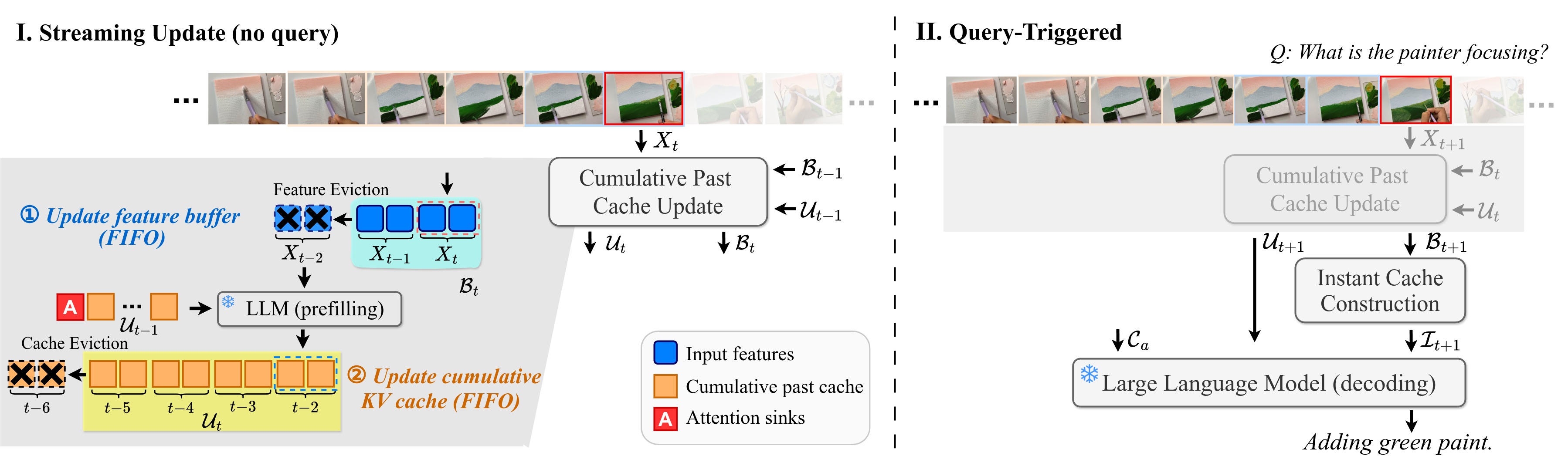}
\caption{Streaming video inference with \dscache. \dscache~maintains a feature buffer $\mathcal{B}$ that stores recent inputs $X$ in FIFO order, and a cumulative past KV cache $\mathcal{U}$. At each time step $t$, new features $X_t$ enter the buffer and older features $X_{t-2}$ are evicted to update the cumulative past KV cache $\mathcal{U}_t$. When inference is required at $t+1$, \eg due to an incoming query, the buffer and the cumulative past KV cache are updated first. The instance cache $\mathcal{I}_{t+1}$ is then constructed from the buffer $\mathcal{B}_{t+1}$ alone to preserve detailed recent context, and then combined with the cumulative past cache $\mathcal{U}_{t+1}$ to generate the response.
}
\label{fig:teaser}
\vspace{-3mm}
\end{figure*}

\subsection{Position Re-indexing}
\label{subsec:position}
Concatenating attention sinks with truncated past caches after eviction results in discontinuous position indices. In addition, position IDs grow linearly over time, leading to positions unseen during training and potential index overflow. These issues may cause generalization failures and out-of-distribution behavior~\cite{chen2023extending}. To stabilize unbounded streaming inference, position indices must therefore remain within a bounded range. However, in standard KV caches, keys are encoded with positional information during cache construction, as in \cref{eq:pre}, which fixes positional transformations and prevents later modification. To address this limitation, prior works~\cite{xiao2023efficient, han2024lm} on RoPE-based LLMs store KV caches before positional transformations, allowing positional information to be applied at use time. Specifically, positional transformations are applied to the concatenation of reused and newly generated keys starting from position 0, and to queries derived from new inputs starting from $l_A+l_W$.
\begin{align}
{Q'}_{t+1}^{(r)} = f(W_q^{(r)} Z_t^{(r)}, l_A+l_W), \nonumber \\
{K'}_{t+1}^{(r)} = f([{K'}_t^{(r)}, W_k^{(r)} Z_t^{(r)}], 0).
\label{eq:reid}
\end{align}
This formulation enables modified position IDs, allowing positions to be shifted upon cache eviction while remaining contiguous, bounded, and in-distribution. However, this strategy is based on empirical heuristics and lacks formal proof of its correctness.

%% file: sec/method.tex
\section{Methodology}
\label{sec:method}
 
We propose Decoupled Streaming Cache~(\dscache), a mechanism for KV cache construction and maintenance in streaming settings. 
To enable position extrapolation beyond the training length, \dscache~incorporates a position-agnostic encoding
strategy, enabling flexible cache construction for unbounded streaming~(\cref{subsec: agnostic}).
At each time step, \dscache~updates a cumulative past KV cache based on the incoming inputs, aggregating historical context while maintaining a fixed memory budget via cache eviction~(\cref{subsec: dscc}). When a user query arrives, an instant KV cache is created on demand to capture fine-grained details for inference, during which the model generates responses using both the cumulative and instant KV caches~(\cref{subsec: infer}). 
Importantly, \dscache~operates solely at the KV cache construction stage and is orthogonal to existing cache utilization techniques, such as compression, pruning, or retrieval.

\subsection{Position-Agnostic Encoding}
\label{subsec: agnostic}
To support unseen positions beyond training lengths and prevent position overflow, we introduce the position-agnostic encoding, which enables flexible position handling during cache construction for unbounded streaming inference.
\begin{definition}
\label{def:inj}
Position-agnostic encoding stores a KV cache prior to applying positional transformations, requiring positional information to be reintroduced at use time.
\end{definition}
\vspace{-2mm}
We establish the correctness properties of position-agnostic encoding. Notably, the practical reassignment of contiguous position IDs to attention sinks and reused caches in \cref{eq:reid} slightly alters relative distances and is therefore not covered by the formal proposition.
 
\begin{proposition}
For RoPE-based LLMs, position-agnostic encoding with shifted position IDs that preserve relative token distances produces outputs identical to standard positional encoding with absolute position IDs.
\end{proposition}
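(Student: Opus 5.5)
The argument rests on the relative-position property of RoPE. RoPE acts on each two-dimensional frequency block $j$ of a query or key vector by a rotation $R_{\theta_j m}$, where $m$ is the token's position ID. Collect these into a block-diagonal orthogonal matrix $R_m = \mathrm{diag}(R_{\theta_1 m},\dots,R_{\theta_{d/2} m})$, so that $f(x,m)=R_m x$ token-wise. We use two facts: $R_m^\top R_n = R_{n-m}$, since rotations in a plane commute and compose additively, and $R_m$ is orthogonal. Consequently, for any shift $s$,
\[
\langle R_{m+s}q, R_{n+s}k\rangle = q^\top R_{m+s}^\top R_{n+s} k = q^\top R_{n-m}k = \langle R_m q, R_n k\rangle .
\]
Every query--key logit therefore depends only on the relative distance $n-m$.

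\textbf{Setup.} Fix a layer $r$. In the standard scheme, each cached key carries its absolute position $p_i$ and each new query its absolute position $p_j$. In the position-agnostic scheme (Definition~\ref{def:inj}), the unrotated keys $W_k^{(r)}z_i$ are stored and rotated at use time with $p_i-s$. New queries are rotated with $p_j-s$, using the same shift $s$ for every token, which is exactly the hypothesis that relative distances are preserved. Values are never rotated under RoPE, so the value caches in the two schemes coincide exactly.

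\textbf{Induction over layers.} For layer $r$, assume that all inputs $Z^{(r)}$, both for cached tokens (at the time they were encoded) and for current tokens, agree between the two schemes. The base case $r=0$ holds because the token embeddings are identical. Then the projections $W_q^{(r)}Z$, $W_k^{(r)}Z$ and $W_v^{(r)}Z$ agree. By the identity above, every attention logit $\langle R_{p_j-s}q_j, R_{p_i-s}k_i\rangle/\sqrt d$ equals its absolute-position counterpart. Causal masks depend only on token order, which is unchanged, so softmax weights are identical, and hence so is the weighted sum of identical values. The output projection, residual connections, normalisation and MLP are position-independent, so $Z^{(r+1)}$ agrees. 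This closes the induction, and the logits at the final layer are identical. Decoding follows by applying the same argument to each generated token, which appends a key at the consistently shifted position.

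\textbf{Main obstacle.} The only delicate point is bookkeeping: the proposition needs a \emph{single} shift $s$ applied to all tokens in a given attention computation, including tokens cached at earlier time steps. The cached keys in the position-agnostic scheme are stored pre-rotation, but they were computed from hidden states $Z^{(r)}$ produced at earlier steps. The induction must therefore be carried over the whole stream history, not just the current step, so that those stored pre-rotation vectors themselves coincide with the standard ones. This requires that every earlier computation also used relative-preserving positions. I would state this as a joint induction on time step and layer. I would also note explicitly that the sink-concatenation re-indexing of \cref{eq:reid} breaks the single-shift condition, since the sinks and the window receive different effective shifts, and so falls outside the statement, as the paper remarks.
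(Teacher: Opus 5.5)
Your proof is correct and follows essentially the same route as the paper: the RoPE relative-position property makes every query--key logit invariant under a uniform shift, values carry no positional information, and a joint induction over layers and time steps propagates equality of the hidden states (and hence of the stored pre-rotation keys). The only differences are minor. You handle an arbitrary uniform shift $s$ in one pass and derive the rotation identity explicitly. The paper instead treats the eviction-free case ($s=0$) and the eviction case ($s=l_t-l_W$) separately and simply cites the RoPE property.
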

\vspace{-2mm}

\begin{corollary}
Under position-agnostic encoding, cache entries can be partitioned, recombined, or selectively reused through reassignment of position IDs.
\end{corollary}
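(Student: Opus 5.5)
The plan is to read the corollary as a statement about \emph{well-definedness} and \emph{consistency}, and to reduce it to the preceding proposition one block at a time. Under Definition~\ref{def:inj}, each stored entry at layer $r$ is a pair $(W_k^{(r)} z_j, W_v^{(r)} z_j)$ with no rotation applied. Hence the stored tensor does not depend on whichever position ID will later be attached to it.

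First I will formalize the operations. A partition splits the index set of the cache into disjoint blocks $S_1,\dots,S_k$. A recombination or selective reuse picks a subset of blocks, orders them, and specifies an injective position assignment $p$ from entries to integers, together with a query position $p_q$. The attention logit between a query and entry $j$ is then defined as $\langle R(p_q) q, R(p_j) k_j\rangle$, where $R(\cdot)$ is the RoPE rotation.

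By the RoPE identity $R(a)^\top R(b) = R(b-a)$, this logit equals $q^\top R(p_j - p_q) k_j$. So the attention output depends only on the pairwise offsets induced by $p$. It is therefore a legitimate RoPE attention computation for any choice of $p$, with no constraint tied to where the entries were originally produced. This gives the ``can be done'' part. With standard encoding the same freedom would require first undoing the baked-in rotation $R(m_j)$, which position-agnostic storage makes unnecessary.

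Second, I will give the equivalence guarantee. Suppose the assignment restricted to each reused block $S_i$ is a shift of the block's original contiguous IDs, i.e.\ $p_j = m_j + \delta_i$ for $j \in S_i$. Applying the preceding proposition within each block shows that all intra-block relative distances, and hence all intra-block attention contributions, coincide with those of standard absolute encoding. If moreover a single shift $\delta$ is used for all selected blocks and the query, the proposition applies to the whole recombined sequence. The output is then identical to running standard encoding on the original positions restricted to the retained entries; this covers selective reuse with eviction of the complement.

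For differing shifts $\delta_i$, the result is still exactly the standard-encoding output for the configuration in which block $S_i$ was placed at absolute positions $m_j + \delta_i$. The caveat is that cross-block distances are now those chosen by $p$, not the original ones. This matches the paper's remark that contiguous reassignment after eviction alters distances and is outside the proposition's exact scope.

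The main obstacle is stating precisely what is and is not preserved. The unrotated keys and values were themselves computed from hidden states $z_j$ that depended on the context and positions at construction time, and reassignment does not recompute them. I will therefore phrase the corollary's correctness relative to the stored hidden states: reassignment changes only the positional term, never the cached content. Exact equality with a full recomputation holds only under uniform shifts, by the proposition; otherwise the guarantee is that the operation is a consistent RoPE computation under the newly assigned IDs.
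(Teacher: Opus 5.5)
Your proposal is correct and takes essentially the paper's route. The paper gives no separate argument for the corollary. It treats it as a consequence of Proposition 3.2: keys are stored unrotated, and the RoPE inner product depends only on offsets. The eviction case of the supplementary proof, which shifts keys and queries by the same offset $l_t-l_W$, is exactly your uniform-shift selective-reuse case. Your handling of block-dependent shifts matches the paper's remark that contiguous reassignment changes relative distances and so falls outside the proposition.

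One wording fix. The proposition guarantees equality with the standard-encoding KV cache, whose stored entries also carry their construction-time context. It does not guarantee equality with a from-scratch recomputation. So ``exact equality with a full recomputation'' should read ``exact equality with standard-encoding cache reuse.'' Once entries have been evicted, even a uniform shift does not reproduce recomputing the retained tokens alone.
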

\vspace{-2mm}

The proof is provided in the Supplementary. These properties form the foundation for flexible cache utilization in streaming inference. We adopt position-agnostic encoding as the default formulation for the remainder of the paper.

\subsection{Cumulative Past KV Cache Update}
\label{subsec: dscc}
Standard streaming inference with uniform KV caches suffers from a cumulative effect, where evicted caches continue to influence the construction of newly added caches, degrading the informativeness of encoded recent inputs. To enhance the quality of recent input encoding while maintain accumulated past context, we decouple the storage of recent input features from cumulative KV cache update. Specifically, at time step $t$, \dscache~maintains a feature buffer $\mathcal{B}_{t-1}$ with fixed size $l_I$ that stores the most recent inputs. Upon arrival of the new input $X_{t}$, the buffer is updated in a first-in-first-out~(FIFO) manner by appending $X_t$ and evicting the oldest entry $X_{t-l_I}$ when full:
\begin{align}
\mathcal{B}_t =
\begin{cases}
\big[ \mathcal{B}_{t-1}, \ \{X_t\}\big], & |\mathcal{B}_{t-1}| < l_I, \\
\big[\mathcal{B}_{t-1} \setminus \{X_{t-l_I}\}, \ \{X_t\}\big], & |\mathcal{B}_{t-1}| = l_I.
\end{cases}
\end{align}
The evicted input $X_{t-l_I}$ is then used to update the cumulative past KV cache. To formalize this process, we introduce a KV cache construction operator $\varphi$, which generates KV caches for newly added tokens under the position-agnostic encoding. Given an input $X$ and past KV caches $\mathcal{C}$, let $Z^{(0)} = X$ and $Z^{(r+1)} = g_r(Z^{(r)}, \mathcal{C}^{(r)})$ denote the input of layer ${r+1}$. The operator projects the input $Z^{(r)}$ of each layer into KV caches.
\begin{align}
 \varphi(X, \mathcal{C}) = \big\{ ( W_k^{(r)} Z^{(r)}, \ W_v^{(r)} Z^{(r)}) \big\}_{r=0}^{N-1},
\end{align}
where $\varphi$ produces KV caches only for newly added tokens and excludes previously stored caches. 

Given the previous cumulative past KV cache $\mathcal{U}_{t-1}$, the attention sink KV cache $\mathcal{C}_{a}$, and evicted feature $X_{t-l_I}$, the cumulative KV cache is updated as
\begin{align}
 \mathcal{U}_{t} = \big[ \mathcal{U}_{t-1}, \ \varphi(X_{t-l_I}, \ [\mathcal{C}_{a}, \zeta (\mathcal{U}_{t-1}) ]) \big],
 \label{eq:long}
\end{align}
where $\zeta$ denotes a subsampling operator that selects a subset of the cumulative KV cache of length $l_L$. For the cumulative KV cache, we explicitly distinguish between the stored length $l_U$ and the active length $l_L$ used during cache construction, with $l_L \le l_U$. This formulation retains an extended history while selectively conditioning cache updates on only a subset of past KV caches. In practice, we define $\zeta$ to select the most recent $l_L$ KV caches.

If the updated caches exceed the budget $l_U$, older tokens are evicted in FIFO order along the sequence-length axis, applied independently for each transformer layer $r$.
\begin{align}
\mathcal{U}_{t}^{(r)} =
\begin{cases}
\mathcal{U}_{t}^{(r)}, & |\mathcal{U}_{t}^{(r)}| \le l_U, \\
\{(K_i^{(r)}, V_i^{(r)})\}_{i=|\mathcal{U}_{t}^{(r)}|-l_U}^{|\mathcal{U}_{t}^{(r)}|-1}, & |\mathcal{U}_{t}^{(r)}| > l_U .
\end{cases}
\end{align}
In addition, when the evicted input $X_{t-l_I}$ contains text~(\eg previous queries or responses), the corresponding text KV caches can be removed from $\mathcal{U}_{t}$ to avoid interference with subsequent visual caching. This is useful for settings where subsequent queries are independent of prior text.

\subsection{KV Cache-based Inference} 
\label{subsec: infer}
Building on the updated cumulative KV cache and the feature buffer, \dscache~supports inference by constructing an instant KV cache from the buffer, which is then used jointly with the cumulative past KV cache.

When inference is required, \eg upon a user query at $t+1$, after the cumulative past KV cache update, the instant cache is constructed solely from the current feature buffer $\mathcal{B}_{t+1}$, independent of the cumulative cache $\mathcal{U}_{t+1}$. During construction, the buffer inputs may be temporarily re-encoded, \eg using a higher-resolution version of $X_{t+1}$, without modifying the buffer itself, ensuring consistent features for subsequent cumulative cache updates. The feature buffer preserves fine-grained recency information, ensuring that the instant KV cache construction remains isolated from the cumulative cache. To align with training, attention sink caches $\mathcal{C}_{a}$ are included during the instant cache construction:
\begin{align}
\mathcal{I}_{t+1} = \varphi(\mathcal{B}_{t+1}, \ \mathcal{C}_{a}),
\label{eq:short}
\end{align}
where $\varphi$ is applied to the buffer independently of $\mathcal{U}_{t+1}$.

The final KV cache for inference is formed by concatenating the attention sink, cumulative, and instant caches as 
\begin{align}
\mathcal{C}_{t+1} = [\mathcal{C}_a, \mathcal{U}_{t+1}, \mathcal{I}_{t+1}]
\label{eq:final}
\end{align}
As in the cumulative cache construction, cache selection~(\eg subsampling or retrieval) may be applied before concatenation. After inference, the buffer can be augmented with the response to $X_{t+1}$ for reuse in subsequent steps. Admittedly, recomputing the instant cache incurs additional computation, but the overhead is minor for a small buffer. By decoupling cumulative and instant KV caches, \dscache~balances efficiency and informativeness.

In all, \dscache~involves three distinct attention computations at different stages of streaming, where positional information is applied at use time: the cumulative KV cache update, the instant cache construction, and inference using the final KV cache. For all three, we adopt RoPE-based LLMs with position-agnostic encoding and assign consecutive positional indices starting at zero to each entry in the active KV cache.  With appropriate cumulative cache and feature buffer sizes, positional indices remain bounded during streaming inference. In addition, by assigning consecutive positional IDs across attention sinks and the cumulative KV cache, streaming inference mimics a contiguous training sequence, thereby mitigating the train–inference discrepancy.

\begin{table*}[htb]
\caption{Comparisons on StreamingBench Real-Time Visual Understanding task, evaluated by accuracy. Uniform cache denotes the baseline with uniform streaming KV caching. This task is organized into 10 subtasks: Object Perception~(OP), Causal Reasoning~(CR), Clips Summarization~(CS), Attribute Perception~(ATP), Event Understanding~(EU), Text-Rich Understanding~(TR), Prospective Reasoning~(PR), Spatial Understanding~(SU), Action Perception~(ACP), and Counting~(CT). }
\centering
\vspace{-0.1cm}
\small
\setlength{\tabcolsep}{1.05mm}{
\renewcommand{\arraystretch}{1.35}
\begin{tabular}{lc|cccccccccc|c}
\hline
Model & \#Frames & OP & CR & CS & ATP & EU & TR & PR & SU & ACP & CT & All\\ 
\hline
\hline
\multicolumn{13}{c}{\textbf{Proprietary MLLMs}} \\
\hline 
Gemini 1.5 pro~\cite{team2024gemini} & 1 fps & 79.02 & 80.47 & 83.54 & 79.67 & 80.00 & 84.74 & 77.78 & 64.23 & 71.95 & 48.70 & 75.69 \\
GPT-4o & 64 & 77.11 & 80.47 & 83.91 & 76.47 & 70.19 & 83.80 & 66.67 & 62.19 & 69.12 & 49.22 & 73.28 \\
\hline
\multicolumn{13}{c}{\textbf{Open-source Online VideoLLMs}} \\
\hline
Flash-VStream-7B~\cite{zhang2024flash} & - & 25.89 & 43.57 & 24.91 & 23.87 & 27.33 & 13.08 & 18.52 & 25.20 & 23.87 & 48.70 & 23.23 \\
VideoLLM-online-8B~\cite{chen2024videollm} & 2 fps & 39.07 & 40.06 & 34.49 & 31.05 & 45.96 & 32.40 & 31.48 & 34.16 & 42.49 & 27.89 & 35.99 \\
Dispider-7B~\cite{qian2025dispider} & 1 fps & 74.92 & 75.53 & 74.10 & 73.08 & 74.44 & 59.92 & 76.14 & 62.91 & 62.16 & 45.80 & 67.63 \\
TimeChat-Online-7B~\cite{yao2025timechat} & 1 fps & 80.22 & 82.03 & 79.50 & 83.33 & 76.10 & 78.50 & 78.70 & 64.63 & 69.60 & 57.98 & 75.36 \\
StreamBridge~\cite{wang2025streambridge} & 1 fps & 84.74 & 82.68 & 88.92 & 89.77 & 77.36 & 85.36 & 84.26 & 69.92 & 71.67 & 35.75 & 77.04 \\
\hline
\multicolumn{13}{c}{\textbf{Open-source Offline VideoLLMs~(+ Streaming-Adapted)}} \\
\hline
LLaVA-OneVision-7B \, \, (uniform sampling) & 32 & 80.38 & 74.22 & 76.03 & 80.72 & 72.67 & 71.65 & 67.59 & 65.45 & 65.72 & 45.08 & 71.12 \\
~\cite{li2024llava}  \quad \quad \ \ \, (sliding window) & 32 & 84.28 & 78.91 & 90.22 & 85.26 & 81.13& 77.88 & 73.15 & 71.14 & 76.14 & 37.23 & 77.40 \\
\qquad \qquad \qquad \qquad \quad \, \, \, + Uniform cache & 1 fps & 84.82 & 81.25 & 89.27 & 83.65 & 76.10 & 78.50 & 76.85 & 72.76 & 73.30 &36.70 & 76.92 \\
\qquad \qquad \qquad \qquad \quad \, \, \, + \dscache & 1 fps & 86.72 & 78.12 & 86.12 & 89.00 & 79.62 & 84.42 & 77.78 & 74.39 & 77.84 & 36.70 & \textbf{79.12 } \\
\hline
Qwen2.5-VL-7B \qquad \, \ \, (uniform sampling) & 32 & 78.32 & 80.47 & 78.86 & 80.45 & 76.73 & 78.50 & 79.63 & 63.41 & 66.19 & 53.19 & 73.68 \\
~\cite{bai2025qwen2} \quad \quad \ \ \, (sliding window) & 32 & 81.30 & 81.10 & 88.01 & 86.17 & 81.01 & 83.18 & 81.48 & 70.73 & 71.02 & 43.09 & 77.61 \\
\qquad \qquad \qquad \qquad \quad \, \, \, + Uniform cache & 1 fps & 87.53 & 81.25 & 88.33 & 85.26 & 77.99 & 86.29 & 81.48 & 68.29 & 70.74 & 45.21 & 78.56 \\
\qquad \qquad \qquad \qquad \quad \, \, \, + \dscache & 1 fps & 88.62 & 79.69 & 90.85 & 89.00 & 78.98 & 92.52 & 82.41 & 79.67 & 79.55 & 40.43 & \textbf{82.32} \\
\hline
\end{tabular}}
\label{tab:streambench}
\vspace{-0.3cm}
\end{table*}

%% file: sec/exp.tex
\section{Experiments}
\label{sec:experiments}

\subsection{Experimental Setup}
\textbf{Benchmarks.} 
We primarily evaluate our approach on multiple StreamingVQA benchmarks. StreamingBench~\cite{lin2024streamingbench} is widely adopted; we focus on the real-time visual understanding task, which contains 500 videos with 2500 QA-pairs. OVO-Bench~\cite{niu2025ovo} evaluates real-time perception along with complementary capabilities of backward tracing and forward active responding, comprising 644 videos with 3035 QA-pairs. RVS-Ego and RVS-Movie~\cite{zhang2024flash} serve as additional streaming testbeds for long videos up to one hour in duration, featuring timestamped open-ended questions on 10 Ego4D videos~\cite{grauman2022ego4d} and 22 MovieNet videos~\cite{huang2020movienet}.

\textbf{Evaluation.} 
StreamingBench and OVO-Bench consist of multiple-choice questions and are evaluated using accuracy. RVS-Ego and RVS-Movie contain open-ended questions; model responses are assessed by an LLM for correctness~(accuracy) and overall answer quality~(score).

\textbf{Implementation Details.} We apply our method to two types of state-of-the-art MLLMs for long-video understanding; LLaVA-OV~\cite{li2024llava} and Qwen-2.5-VL~\cite{bai2025qwen2}. 
For feature encoding in the feature buffer, we adopt the default vision tower of each MLLM, specifically SigLIP for Llava-OV and Qwen2.5-VL’s own pretrained Vision Transformer(ViT). 
Following prior work~\cite{yao2025timechat, di2025streaming}, videos are processed at 1 FPS for StreamingBench and OVO-Bench, and 0.5 FPS for RVS-Ego and RVS-Movie. By default, the context window length is $l_W = l_I + l_U = 32$  frames. Within this window, \dscache~maintains a feature buffer of $l_I=4$ frames, and a cumulative KV cache of $l_U=28$ frames, which is fully utilized, \ie $l_L = l_U$. The streaming baseline with standard uniform KV caches corresponds to a special case of \dscache~with $l_I=0$, and $l_U=l_W$. 
The total token budget is bounded by the context window length. For LLaVA-OV-7B, each frame produces 196 tokens ($\sim$6.3K max), and for Qwen2.5-VL (448×448), 286 tokens ($\sim$9.2K max). This excludes system prompt and query tokens, which add a small overhead.
Additional implementation details (\eg resolution scaling and temporal subsampling) are in Supplementary.

\begin{table*}[ht!]
\caption{Comparisons on OVO-Bench, evaluated by accuracy. Uniform cache denotes uniform KV caching baseline. }
\centering
\vspace{-0.1cm}
\small
\setlength{\tabcolsep}{1.6mm}{
\renewcommand{\arraystretch}{1.2}
\begin{tabular}{lc|ccc|c}
\hline
Model & \#Frames & Real-Time Percep. & Backward Tracing & Forward Resp. & Overall\\ 
\hline
\hline
\multicolumn{6}{c}{\textbf{Proprietary MLLMs}} \\
\hline 
Gemini 1.5 pro~\cite{team2024gemini} & 1 fps & 70.8 & 62.3 & 57.2 & 65.3 \\
GPT-4o & 64 & 63.6 & 58.7 & 53.4 & 58.6 \\
\hline
\multicolumn{6}{c}{\textbf{Open-source Online VideoLLMs}} \\
\hline
Flash-VStream-7B~\cite{zhang2024flash} & 1 fps & 29.9 & 25.4 & 44.2 &33.2 \\
VideoLLM-online-8B~\cite{chen2024videollm} & 2 fps & 20.8 & 17.7 & - & - \\
TimeChat-Online-7B~\cite{yao2025timechat} & 1 fps & 61.9 & 41.7 &36.7 & 46.7 \\
\hline
\multicolumn{6}{c}{\textbf{Open-source Offline VideoLLMs~(+ Streaming-Adapted)}} \\
\hline
LLaVA-OneVision-7B \, \, (uniform sampling) & 32 &  63.8 & 43.1 & 51.7 & 52.9 \\
\qquad \qquad \qquad \qquad \qquad \ (sliding window) & 32 & 70.9 & 44.1 & 49.8 & 54.9 \\
\qquad \qquad \qquad \qquad \qquad \ + Uniform cache & 1 fps & 68.4 & 46.0 & 52.9 & 55.6 \\
\qquad \qquad \qquad \qquad \qquad \ + \dscache & 1 fps & 71.5 & 47.8 & 53.1 & \textbf{57.5} \\
\hline
Qwen2.5-VL-7B \qquad \quad \ (uniform sampling) &  32 & 59.6 & 39.4 & 40.4 & 46.6 \\
\qquad \qquad \qquad \qquad \qquad \ (sliding window) & 32 & 70.6 & 43.3 & 44.3 & 52.7 \\
\qquad \qquad \qquad \qquad \qquad \ + Uniform cache & 1 fps & 67.8 & 42.3 & 46.6 & 52.2 \\
\qquad \qquad \qquad \qquad \qquad \ + \dscache & 1 fps & 71.7 & 44.8 & 49.7 & \textbf{55.4} \\
\hline
\end{tabular}}
\label{tab:ovobench}
\vspace{-0.2cm}
\end{table*}

\subsection{Main Results}
Results on StreamingBench and OVO-Bench are reported in~\cref{tab:streambench} and \cref{tab:ovobench}, where \dscache~achieves state-of-the-art performance. By adapting offline models to streaming inference, \dscache~consistently outperforms their offline counterparts with uniform sampling by an average of 7.5\%, and surpasses the state-of-the-art online VideoLLM, StreamBridge and Timechat-online, by 5.0\% on StreamingBench and 10.8\% on OVO-Bench, respectively. The streaming also outperforms prior methods on OVO-Bench. Compared with this baseline, \dscache~yields a further improvement of 2.5\%, demonstrating the effectiveness of enhancing informativeness of encoded recent inputs.  In particular, \dscache~achieves substantial gains on subtasks that depend on rencency information. For instance, on StreamingBench, it improves attribute perception~(ATP), text-rich understanding~(TR), spatial understanding~(SU), and action perception~(ACP) by 4.5\%, 6\%, 6.5\%, and 6.7\%, respectively. These gains are accompanied by modest trade-offs on subtasks emphasizing long-range reasoning, such as causal reasoning (CR) and counting (CT). Notably, the improvement does not stem from using stronger offline models. State-of-the-art methods, such as Timechat-online, also use competitive pretrained models~(\eg Qwen2.5-VL-7B). We further provide detailed comparisons with existing cache optimization methods in \cref{tab:compare_cache}, showing our strong performance with minimum memory usage and modest latency increase. The results show that compared to cache optimization, targeting construction yields more consistent gains.

We evaluate \dscache~on RVS-Ego and RVS-Movie in \cref{tab:rvs} using LLaVA-OV variants. RVS-Ego emphasizes long-range recall, benefiting the streaming baseline with accumulated KV caches, while offline models are limited by sparse sampling. RVS-Movie focuses on recent content, favoring offline models that use dense raw features. Across both datasets, \dscache~achieves consistently strong performance, demonstrating its effectiveness in combining fine-grained recency modeling with accumulated past. We also provide additional results on the captioning task to show its scalability in the presence of text caches~(chat history); please refer to the Supplementary for further details.

\begin{table}[ht!]
\vspace{-0.1cm}
\caption{Comparisons on the RVS-Ego and RVS-Movie benchmarks. 'Acc.' indicates accuracy, and 'Score' is the open-ended answer rating by gpt-3.5-turbo on a scale of 1 $\sim$ 5.}
\centering
\vspace{-0.1cm}
\small
\setlength{\tabcolsep}{1.6mm}{
\renewcommand{\arraystretch}{1.15}
\begin{tabular}{lc|cc|cc}
\hline
Method & \#Frames & \multicolumn{2}{c|}{RVS-Ego} & \multicolumn{2}{c}{RVS-Movie} \\ 
 & & Acc. & Score & Acc. & Score \\ 
\hline 
\hline
LLaVA-OV-7B & 32 & 57.5 & 3.94 & 48.7 & 3.47 \\
\quad + Uniform cache & 0.5 fps & 58.9 & 3.93 & 48.1 & 3.46 \\
\quad + DSCache & 0.5 fps & \textbf{59.5} & \textbf{3.97} & \textbf{49.4} & \textbf{3.50} \\
\hline
LLaVA-OV-0.5B & 32 & 53.6 & 3.82 & 41.1 & 3.36 \\
\quad + Uniform cache & 0.5 fps & 53.2 & 3.78 & 41.2 & 3.34 \\
\quad + DSCache & 0.5 fps & \textbf{54.9} & \textbf{3.89} & \textbf{41.9} & \textbf{3.38} \\
\hline
\end{tabular}}
\label{tab:rvs}
\vspace{-0.4cm}
\end{table}

\subsection{Ablation Study}
\textbf{Cumulative KV cache length $l_U$.}
We study the impact of cumulative KV cache length $l_U$ by fixing the feature buffer length $l_I$, and varying the context length $l_W = l_I + L_U$. For \dscache~, we fix $l_I=4$ frames, while the uniform streaming baseline is equivalent to set $l_I=0$. As shown in~\cref{fig:long}, under \dscache~, OVO-Bench consistently benefits from increasing $l_U$, whereas performance on StreamingBench degrades when $l_U$ becomes large. For the streaming baseline, we observe a clear trade-off with respect to $l_U$: short caches lack sufficient context, while overly long caches accumulate redundant or noisy information, degrading performance. In contrast, \dscache~ decouples instant and cumulative cache construction, making it more robust to variations in $l_U$.

\textbf{Feature buffer size $l_I$. } The buffer size $l_I$ directly affects the informativeness of the instant caches. In \cref{fig:short}, we fix $l_U=28$ frames and analyze the impact of varying $l_I$. Small $l_I$ preserves higher fidelity  but fails to capture sufficient temporal context. A moderate buffer size provides a better balance, effectively capturing informative recent dynamics while remaining efficient. Notably, OVO-Bench shows larger gains with smaller $l_I$, indicating a stronger dependence on fine-grained recency context.

\begin{table*}[htb]
\centering
\caption{Comparison with existing cache optimization methods. Runtime and memory are tested on NVIDIA H200 GPU with 1 FPS. StreamingVLM adopts uniform caching, equivalent to our uniform cache baseline.}
\label{tab:compare_cache}
\small
\setlength{\tabcolsep}{1.4mm}
\begin{tabular}{lccccc}
\toprule
Method & Backbone & \makecell{ StreamingBench \\ (Real-time)} & \makecell{OVO-Bench \\ (Real / Backward / Forward)} & Latency & Memory \\
\midrule
DSCache            & \multirow{3}{*}{LLaVA-OV-7B} & 79.12 & 57.5 (71.5 / 47.8 / 53.1) & 0.30s & 16 GB \\
ReKV~\cite{di2025streaming}   &  & 71.06 & 53.9 (62.0 / 48.5 / 51.2) & 0.23s & 21 GB \\
StreamingVLM~\cite{xu2025streamingvlm} &  & 76.92 & 55.6 (68.4 / 46.0 / 52.9) & 0.21s & 16 GB \\
\midrule
DSCache            & \multirow{3}{*}{Qwen2.5-VL-7B}      & 82.32 & 55.4 (71.7 / 44.8 / 49.7) & 0.49s & 17 GB \\
InfiniPot-V~\cite{kim2025infinipot} &  & 76.40 & 53.6 (65.9 / 47.6 / 47.9) & 0.37s & 17 GB \\
StreamingVLM~\cite{xu2025streamingvlm} &  & 78.56 & 52.2 (67.8 / 42.3 / 46.6) & 0.31s & 17 GB \\
\bottomrule
\end{tabular}
\vspace{3mm}
\end{table*}

\begin{figure*}[t]
 \centering
 \begin{subfigure}[t]{0.33\linewidth}
 \centering
 \includegraphics[width=\linewidth]{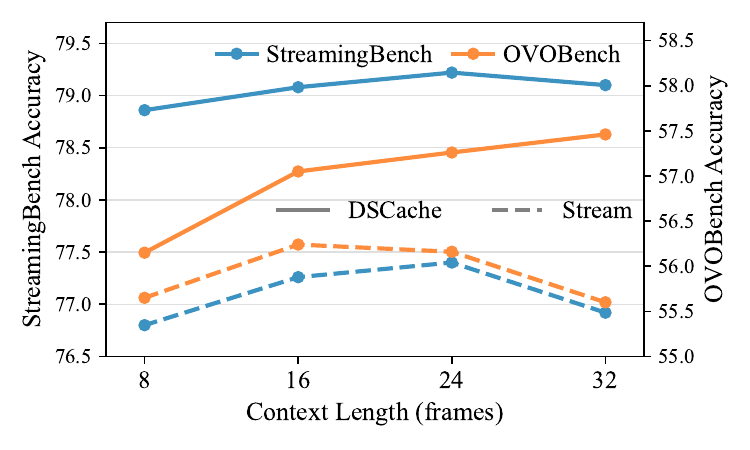}
 \vspace{-6mm}
 \caption{Cumulative KV cache} 
 \label{fig:long}
 \end{subfigure}
 \begin{subfigure}[t]{0.33\linewidth}
 \centering
 \includegraphics[width=\linewidth]{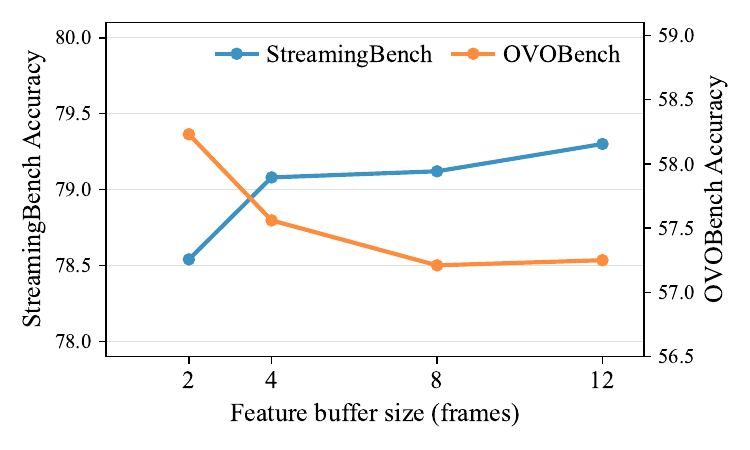}
 \vspace{-6mm}
 \caption{Feature buffer} 
 \label{fig:short}
 \end{subfigure}
 \begin{subfigure}[t]{0.33\linewidth}
 \centering
 \includegraphics[width=\linewidth]{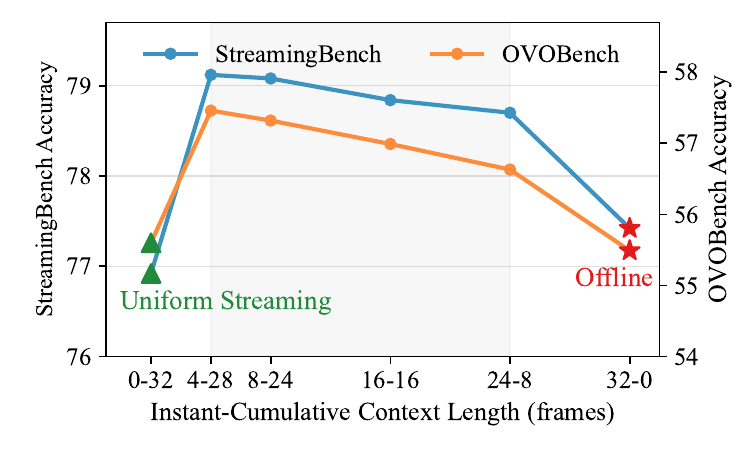}
 \vspace{-6mm}
 \caption{Offline vs. uniform streaming trade-off } 
 \label{fig:trade}
 \end{subfigure}
 \vspace{-2.0mm}
 \caption{Ablations on cumulative past KV cache and feature buffer for instant KV cache construction using LLava-OV-7B.}
 \label{fig:abl}
 \vspace{-0.5mm}
\end{figure*} 

\begin{figure*}[tbh]
    \centering
    \begin{subfigure}[t]{1.00\linewidth}
        \centering
        \includegraphics[width=\linewidth]{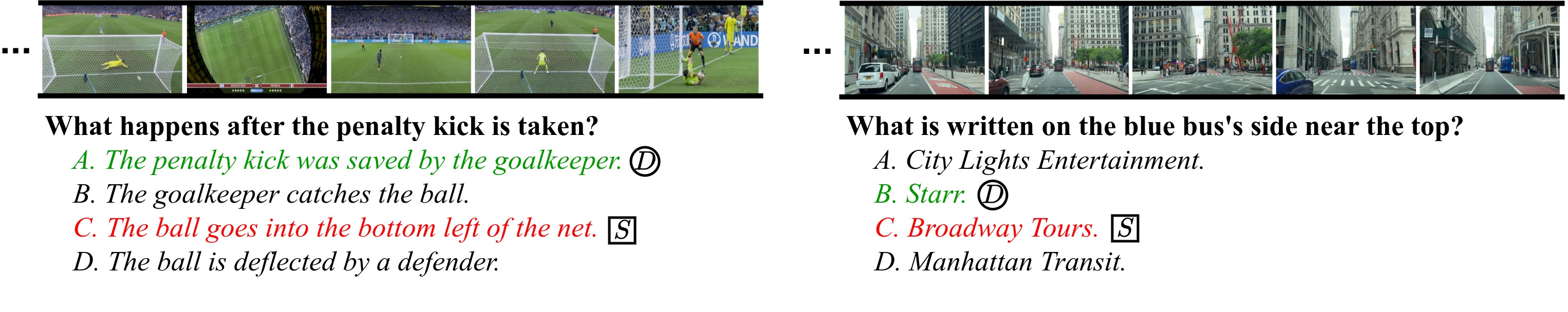}
        \vspace{-7.5mm}
        \caption{Successful cases. }  
    \end{subfigure}
    \begin{subfigure}[t]{1.00\linewidth}
        \centering
        \vspace{2mm}
        \includegraphics[width=\linewidth]{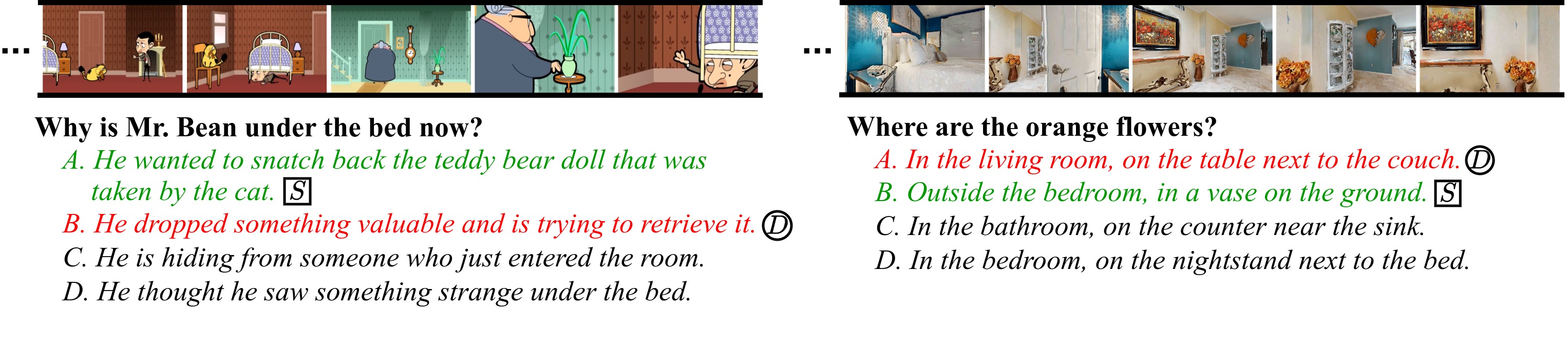}
        \vspace{-7.5mm}
        \caption{Failure cases.}  
    \end{subfigure}
    \vspace{-1.5mm}
    \caption{Qualitative analysis. Red text shows incorrect predictions; green text denotes the ground truth. $\squared{S}$: uniform streaming baseline. $\circled{D}$: \dscache.}
    \label{fig:visu}
    \vspace{-3mm}
\end{figure*}

\begin{table}[ht!]
\vspace{-0.1cm}
\caption{Ablations on the effect of attention sinks in instant and cumulative cache construction. }
\vspace{-0.1cm}
\centering
\small
\setlength{\tabcolsep}{1.1mm}{
\renewcommand{\arraystretch}{1.10}
\begin{tabular}{c|cc|cc}
\hline
\multirow{2}{*}{Base Model} & \multicolumn{2}{c|}{Attention sink} & \multirow{2}{*}{StreamingBench} & \multirow{2}{*}{OVO-Bench} \\
& cum. & instant & \\
\hline
\multirow{3}{*}{Llava\_OV\_7B} & \cmark & \cmark & \textbf{79.12} & \textbf{57.52 }\\
 & \xmark & \cmark & 79.06 & 57.09 \\
 & \cmark & \xmark & 74.27 & 55.20 \\
\hline
\multirow{3}{*}{Qwen2.5\_VL\_7B} & \cmark & \cmark & \textbf{82.32} & \textbf{55.40} \\
 & \xmark & \cmark & 81.80 & 53.41 \\
 & \cmark & \xmark & 76.31 & 50.15 \\
\hline
\end{tabular}
}
\label{tab:abl_att}
\vspace{-3mm}
\end{table}

\textbf{Attention sinks $\mathcal{C}_a$. } 
Prior work highlights the importance of attention sinks, implemented as initial tokens. We examine their role in constructing both cumulative and instant caches in~\cref{tab:abl_att}. Removing attention sinks leads to a clear performance degradation, with a larger impact on instant cache construction, reflecting the strong dependence of these tasks to recent context. This behavior is expected, as models are trained with attention sinks, and removing them introduces a training–inference mismatch that adversely affects performance.

\textbf{Sliding-window offline vs. uniform streaming trade-off. }
Offline models with sliding-window sampling  focus on recent context, whereas the streaming baseline with uniform KV caches accumulates long-range history. Given a context window of $l_U + l_I$, setting $l_I=0$ reduces \dscache~to the streaming baseline, while setting $l_U= 0$  degenerates it to a sliding-window offline model. In \cref{fig:trade}, we illustrate this trade-off by fixing the context window to 32 frames and varying $l_I$ and $l_U$. \dscache~combines the strengths of both, preserving informative recent context while maintaining cumulative KV caches, leading to superior performance.

\begin{table}[ht]
\centering
\small
\caption{Combining DSCache with pior methods (LLaVA-OV-7B).}
\label{tab:combine_methods}
\setlength{\tabcolsep}{1.3mm}{
\begin{tabular}{lcc}
\toprule
Method & StreamingBench & \makecell{OVO-Bench \\ (Real / Backward / Forward)} \\
\midrule
DSCache              & 79.12 & 57.5 (71.5 / 47.8 / 53.1) \\
\ \ + ReKV   & 79.41 & 58.5 (72.4 / 49.1 / 53.8) \\
\ \ + InfiniPot-V & 79.60 & 58.4 (71.8 / 48.9 / 54.7) \\
\bottomrule
\end{tabular}
}
\vspace{-2mm}
\end{table}

\textbf{Compatibility with cache optimization methods. }  Existing methods, \eg ReKV~\cite{di2025streaming} and InfiniPot-V~\cite{ kim2025infinipot}, focus on cache compression/retrieval over long contexts. \dscache~is complementary, prioritizing recency encoding to better capture current events. It can be combined with these methods to improve long-range modeling. We observe further gains when applying these to the cumulative past cache in \cref{tab:combine_methods}.

\subsection{Runtime Analysis}
\dscache~maintains the cumulative KV cache as in the streaming baseline but recomputes the instant cache on demand, trading computational efficiency for recent context informativeness. We evaluate runtime and memory usage on StreamingBench using an NVIDIA H200 GPU~(\cref{tab:eff}). We report per-query latency, when a response is generated and latency is most relevant; the overhead for non-query frames is negligible for \dscache~and comparable to the streaming baseline. 

Offline models process 32 sampled frames at query time, while \dscache~and the streaming baseline use the streaming implementation and maintain constant memory, enabling deployment on unbounded streams. Offline models must recompute all 32 sampled frames per query, resulting in high latency; for the streaming baseline and \dscache, latency is split into a prefilling stage for KV cache construction and a decoding stage for response generation. \dscache~matches the streaming baseline in decoding latency but incurs additional prefilling cost to compute the instant cache. Nonetheless, the overall overhead remains much lower than offline inference and comparable to the streaming baseline, yielding a favorable performance–efficiency trade-off. Notably, this extra computation is triggered only when a user query arrives and is not performed at every time step.  

Admittedly, recomputing the instant cache in \dscache~adds latency, which increases with respect to buffer size. However, latency remains within the input temporal resolution (1 FPS), making it suitable for streaming. We further propose an approximate variant that avoids full recomputation at each query, achieving computational complexity comparable to the uniform cache with only a minor performance trade-off. See more details in Supplementary.

\begin{table}[ht!] 
\caption{Runtime analysis on StreamingBench dataset using LLava-OV-7B with an NVIDIA H200 GPU. }
\vspace{-1mm}
\centering
\small
\setlength{\tabcolsep}{1.5mm}{
\renewcommand{\arraystretch}{1.2}
\begin{tabular}{c|cccc}
\hline
\multirow{2}{*}{Model} & \multirow{2}{*}{GPU Memory} & \multicolumn{3}{c}{Latency(s)} \\ 
\cline{3-5}
& & Prefilling & Decoding & Overall \\
\hline
Offline & 19 GB & - & - & 2.85 \\\hline
Uniform & 16 GB & 0.07 & 0.14 & 0.21 \\
DSCache & 16 GB & 0.16 & 0.14 & 0.30 \\
\hline
\end{tabular}
}
\label{tab:eff}
\vspace{-2mm}
\end{table}
 
\vspace{-0.8mm}
\subsection{Qualitative Results}
\cref{fig:visu} shows qualitative results on StreamingBench. Due to the cumulative nature of uniform KV caching, the streaming baseline either attends to outdated context (left) or misses fine-grained visual details (right), highlighting the importance of preserving recent context, as achieved by~\dscache. 
We further analyze attention distributions over 50 cases where DSCache is correct but the uniform baseline is not. Splitting the cache into recent (buffer) and past, DSCache allocates ~67\% of attention to recent, compared to ~45\% for the baseline, indicating stronger recency focus and better capture of current events in streaming scenarios.

However, cases requiring extensive long-range reasoning remain challenging for \dscache. Failure cases of \dscache~are shown in \cref{fig:visu}, where required information lies in distant context or demands comprehensive long-range reasoning. These results highlight the inherent trade-off between capturing detailed recent information and retaining long-range dependencies.

%% file: sec/concl.tex
\section{Conclusion}
\label{sec:conclusion}
This paper presents \dscache, a training-free method for adapting offline models to streaming inference under strict resource and efficiency constraints.  Building on a position-agnostic encoding strategy, \dscache~enables unbounded streaming by mitigating the issue of position extrapolation. Meanwhile,
by decoupling cumulative and instant cache construction, 
it prevents the cumulative past KV cache from interfering with the instant cache construction, preserving informative recent context while enabling leverage the cumulative past KV cache. \dscache~achieves superior performance on streaming video understanding tasks, especially those heavily rely on recent context.
Future efforts will focus on improving the efficiency of the instant cache construction and augmenting \dscache~with better cache optimization strategies to further enhance cumulative past KV cache construction and usage.

%% file: sec/suppl.tex
\appendix
\onecolumn
\section{Cumulative Effect in Uniform Streaming Cache}

\cref{fig:abl_similarity} compares three KV cache constructions for a newly arriving frame: 1) single frame, 2) offline~(recomputed from sampled frames), and 3) streaming~(uniform KV cache). Using the single-frame construction as reference, we evaluate how faithfully the latter two encode the frame, measuring cache similarity and the resulting performance.
As shown in the figure, the streaming cache deviates more from this reference, indicating a cumulative effect from residual past information that weakens new caches. This effect is further reflected in the performance drop observed with the uniform cache baseline. To further quantify this, we fix the cache size for inference and vary how the current frame cache is constructed. As in \cref{tab:abl_cum}, increasing context length in streaming cumulative cache increases residual accumulation, which reduces cache similarity and degrades performance. This indicates a strong correlation between cache similarity and task performance.

\begin{figure*}[htb]
 \centering
 \includegraphics[width=0.4\linewidth]{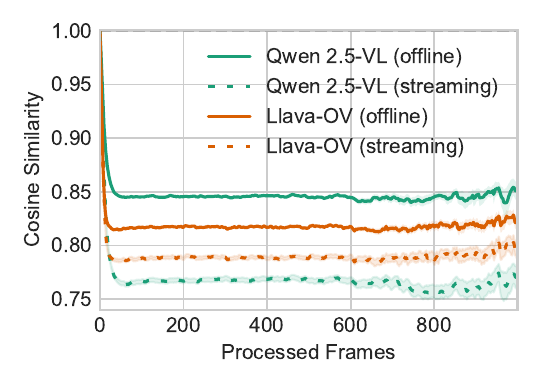}
 \vspace{-3mm}
 \caption{Value-cache similarity for the arriving frame, using single-frame cache as the reference. Compared to offline caches, uniform streaming caches accumulate past context, resulting in lower similarity, indicating that newly arriving frames are captured less accurately~(the context window length is set to 32 frames).}
 \label{fig:abl_similarity}
 \vspace{-0mm}
\end{figure*}

\begin{table}[htb]
\centering
\caption{Similarity vs. performance on StreamingBench (LLaVA-OV-7B).}
\label{tab:similarity_performance}
\begin{tabular}{lccc}
\toprule
\textbf{Setting} & \textbf{Context} & \textbf{Cosine Similarity} & \textbf{Accuracy} \\
\midrule
Offline      &               & 0.82 & 77.96 \\
Streaming & 4s      & 0.81 & 77.76 \\
Streaming & 8s      & 0.79 & 77.20 \\
Streaming & 16s     & 0.74 & 76.79 \\
Streaming & 32s     & 0.71 & 76.67 \\
\bottomrule
\end{tabular}
\label{tab:abl_cum}
\vspace{-2mm}
\end{table}

\section{Proof}
\paragraph{Proposition 3.2.}
For RoPE-based LLMs, using position-agnostic encoding with shifted position IDs that preserve relative distances between tokens produces identical outputs to standard positional encoding with absolute position IDs.

\begin{proof}
We compare two KV cache encoding strategies for RoPE-based LLMs with different treatment of positional information in attention calculation. 
\begin{itemize}
\item \textbf{Standard positional encoding}: keys incorporate positional information at construction using absolute position IDs.
\item \textbf{Position-agnostic encoding}: keys are stored before applying position transformation, and position IDs are reassigned starting from 0 when used, preserving relative distances.
\end{itemize}
Values are position-agnostic and thus identical under both strategies. For simplicity, we assume contiguous position IDs. The proof extends to non-contiguous cases as long as the same relative distances are preserved across these two strategies. As positional information affects the model only via self-attention, while subsequent residual connections and feed-forward networks are position-agnostic, it suffices to show that self-attention outputs are identical.

We prove equivalence by induction over layers and time. 1) Layer-wise: the first transformer layer receives identical inputs under both strategies and produces identical outputs; these outputs form the inputs to the next layer. By induction over layers, the outputs of each corresponding layer are identical across the two strategies. 2) Time-wise: the same reasoning applies at past or future steps, implying equivalence for all time steps.

\textbf{Eviction-free.} Let the current input be $X_u=\{x_{i+l_u}\}_{i=0}^{n_u -1}$ with $x_i \in \mathbb{R}^d$ being the token embedding, and the past KV cache constructed from previous inputs $\{X_j\}_{j=0}^{u-1}$, where $l_u$ denotes the total number of observed tokens up to $u$, and $n_u$ denotes the length of $X_u$.
Let $r$ index the transformer layers, Denote the standard KV cache as $\mathcal{P} = \{(K^{(r)}, V^{(r)})\}_{r=0}^{N-1}$, and the position-agnostic cache as $\mathcal{P}' = \{({K'}^{(r)}, V^{(r)})\}_{r=0}^{N-1}$. Denote $Z^{(r)}$ as the input of layer $r$, where both strategies share the same input, with $Z^{(0)} = X$.

We start from the first transformer layer with $z^{(0)}_i = x_i$, and omit the layer index $r$ for notational simplicity. Values are updated by concatenate the past cache with the new one, as
$$V = [V, \ \{W_v z_i\}_{i=l_u}^{l_u + n_u -1}] = \{W_v z_i\}_{i=0}^{l_u + n_u - 1},$$
and keys are updated as
$$ K = [K, \{f(W_k z_i, i)\}_{i=l_u}^{l_u 
+ n_u-1}] = \{f(W_k z_i, i)\}_{i=0}^{l_u + n_u-1}, $$
$${K'} = [{K'}, \{W_k z_i\}_{i=l_u}^{l_u+n_u-1}]  = \{W_k z_i\}_{i=0}^{l_u+n_u-1}, $$
where the function $f$ incorporates position information, $W$ is a linear projection to hidden dimension $D$, and $[ , ]$ denotes concatenation operation along the sequence-length dimension.
Since no cache eviction occurs, the reassigned position IDs in the position-agnostic encoding coincide with the absolute position IDs. These two forms can be aligned by applying the positional transformation to the position-agnostic keys:
$$K_i = f({K'}_i, i)$$

Given the current queries with positional information encoded, $Q = \{f(W_q z_i, i)\}_{i=l_u}^{l_u + n_u -1}$, the attention weights computed under the two strategies are identical:
$$a_{ij} = softmax\left(\frac{Q_i K_j}{\sqrt{D}}\right) = softmax\left(\frac{f(W_q z_i, i) f(W_k z_j, j))}{\sqrt{D}}\right) = softmax\left(\frac{Q_i f({K'}_j, j))}{\sqrt{D}}\right) = a'_{ij},$$
and the self-attention outputs are the same,
$$\sum_{j=0}^{l_u + n_u -1} a'_{ij} V_j = \sum_{j=0}^{l_u + n_u -1} a_{ij} V_j.$$ 

Applying position-agnostic residual and FFN operations, both strategies produce identical layer-wise outputs $\{z_i^{(1)}\}_{i=l_u}^{l_u + n_u-1}$, which serve as inputs to the next layer. By induction over layers, the outputs of each corresponding layer are identical across the two strategies. Similarly, by induction over time, this layer-wise equivalence extends to previous and future inputs.

\textbf{Case eviction. } Suppose at time step $t$, the past KV cache is truncated to a window of length $l_W$ to meet the resource constraints. Let $X_t = \{x_{i+l_t}\}_{i=0}^{n_t-1}$ be the current input. Given the total number of observed tokens $l_t$, the KV caches under both strategies can be represented 
$$\mathcal{P}^{(r)} = \{K_i^{(r)}, V_i^{(r)}\}_{i=l_t - l_W}^{l_t-1}, \quad \mathcal{P'}^{(r)} = \{{K'}_i^{(r)}, V_i^{(r)}\}_{i=l_t - l_W}^{l_t-1}$$

Likewise, we first analyze the attention computation in the first transformer layer, and omit the layer index $r$ for simplicity.

\begin{itemize}
\item Standard KV cache. Given the new input sequence $\{x_i\}_{i=l_t}^{l_t+n_t-1}$, queries are computed using consecutive absolute position IDs starting from $l_t$,
$$Q =\{Q_i\}_{i=l_t}^{l_t+n_t-1} = \{ f(W_q x_i, i) \}_{i=l_t}^{l_t+n_t-1}.$$
Keys and values are updated by concatenating the cached ones with those computed from the new input:
\begin{align*}
    K &= \{K_i\}_{i=l_t - l_W}^{l_t+n_t-1} = [\{f(W_k x_i, i)\}_{i=l_t - l_W}^{l_t -1}, \ \{f(W_k x_i, i) \}_{i=l_t}^{l_t+n_t-1}] = \{f(W_k, x_i, i)\}_{i=l_t - l_W}^{l_t+n_t-1} \\
    V &= \{V_i\}_{i=l_t - l_W}^{l_t+n_t-1} = [\{W_v x_i\}_{i=l_t - l_W}^{l_t -1}, \ \{W_v x_i\}_{i=l_t}^{l_t+n_t-1}] = \{W_v x_i\}_{i=l_t - l_W}^{l_t+n_t-1} 
\end{align*}

$\forall i \in [l_t, l_t+n_t-1], j \in [l_t - l_W, l_t+n_t-1]$, the attention weight between the query $Q_i$ and key $K_j$ is computed as 
$$a_{ij} = softmax(\frac{Q_i K_j}{\sqrt{D}}),$$ 
with
\begin{align*}
Q_i K_j = \ <f(W_q x_i, i), f(W_k x_j, j)>  = g(W_q x_i, W_k x_j, i-j)
\end{align*}
where the last equality to the relative form $g$ follows from the property of RoPE that the inner product depends only on the relative position $i-j$. 

\item Position-agnostic KV cache. Since value caches do not encode positional information, given the new input sequence, values are updated identically to the standard strategy, 
$$V = \{W_v x_i\}_{i=l_t - l_W}^{l_t+n_t-1}$$ 

Keys are first updated to incorporate new input, after which position information is incorporated using consecutive position IDs ranging from $0$ to $l_W+n_t-1$, namely by subtracting the offset $l_t - l_W$
\begin{align*}
    K' &= \{W_k x_i\}_{i=l_t - l_W}^{l_t+n_t-1} \\
    K'' &= \{f(W_k x_i, i - l_t + l_W)\}_{i=l_t - l_W}^{l_W+n_t-1}
\end{align*}
where $K'$ and $V$ will be stored as the new KV cache.

Queries are computed using position IDs consistent with those of the newly added keys, \ie from $l_W$ to $l_W+n_t-1$. 
$$Q' = \{ f(W_q x_i, i - l_t + l_W) \}_{i=l_t}^{l_t+n_t-1}.$$
   
$\forall i \in [l_t, l_t+n_t-1], j \in [l_t - l_W, l_t+n_t-1]$, the inner product between updated keys and queries becomes
\begin{align*}
   Q'_i K''_j = \ <f(W_q x_i, i - l_t + l_W), f(W_k x_j, j - l_t + l_W)>  = g(W_q x_i, W_k x_j, i-j) = Q_i K_j
\end{align*}
   Therefore, the attention weights satisfy
   $$a'_{ij} = softmax\left(\frac{Q'_i K''_j}{\sqrt{D}}\right) = softmax\left(\frac{Q_i K_j}{\sqrt{D}}\right) = a_{ij},$$

\end{itemize}

As shown above, because both the attention weights and value vectors are identical under the two strategies, their self-attention outputs are identical.
By induction over layers and time, the outputs of each corresponding layer are identical across the two strategies.

\end{proof}

\section{Implementation}
For StreamingBench and OVO-Bench, videos are processed at 1 FPS. The context window length $l_W = l_I + l_U$ 
for these datasets is set to 32 frames. \dscache~maintains a feature buffer of $l_I=$ 4 frames for recent inputs, while the cumulative past KV cache covers the remainder of the context window and is fully utilized, \ie $l_L = l_U$.  In addition, we perform an extra operation on the feature buffer by doubling the spatial resolution of the very recent frame to capture more fine-grained details. In the Qwen-2.5-VL implementation, which generates more tokens per frame, we also sample the cumulative KV cache at 0.5 FPS during decoding to reduce redundancy over the long-range context.

For RVS-Ego and RVS-Movie, we use a sampling rate of 0.5 FPS. As queries in these two datasets involve long context recall, we adopt a longer context window of $l_W=$ 256 frames, while updating the cumulative KV cache using only the most recent 32 frames, \ie $l_L=$ 32 frames. The feature buffer is set to $l_I=$ 4 frames for RVS-Ego, and $l_I=$ 8 frames for RVS-Movie. No additional operations such as increased spatial resolution or cache sampling are applied.

Since all of the above streaming VQA datasets involve independent queries, the cumulative KV cache stores only visual token representations and discards the text tokens corresponding to each query.

\section{Experimental Results}

\subsection{Captioning}
We evaluate \dscache~on an extra caption task using LiveCC3K-Sports-CC~\cite{chen2025livecc}, a dataset of real-time sports commentary that captures rich spatiotemporal semantics.
Models are prompted with the video title and preceding CCs to generate live commentary. Predictions from two models are compared pairwise, with GPT-4o selecting the better one based on the ground-truth CCs, and winning rates are used to rank the models.
We adapt StreamingVLM~\cite{xu2025streamingvlm} by replacing its uniform streaming cache with \dscache. Although StreamingVLM is pretrained and deployed with a small context window of 16 frames, which partially mitigates the cumulative effect in the streaming cache, incorporating \dscache~in a training-free manner achieves 
an average winning rate of 50.8$\pm$0.2\% over 3 runs, indicating a modest but consistent improvement in real-time captioning.

\subsection{Extra Ablation}

\textbf{Cache decoupling. } Our design decouples the feature buffer from the cumulative past KV cache, delaying the caching of recent features. The uniform cache baseline serves as the non-decoupled case, and the effect of cache decoupling is reflected in the performance comparison with the baseline where consistent gains in \dscache~show its effectiveness. 

\textbf{Position-agnostic encoding. } It enables flexible cache operations in unbounded streams. Standard position-aware caching is incompatible with cache eviction, causing discontinuous positions; in long streams, it leads to out-of-distribution positions and overflow.
We test position-aware caching in the presence of discontinuity and out-of-distribution positions on StreamingBench with LLaVA-OV-7B. Performance drops from 79.12 to 77.84, highlighting the need of our encoding.
Beyond the proof, we empirically validate their equivalence. In a controlled setting without eviction and positional overflow (both encodings are applicable), we evaluate a subset of StreamingBench ($\le$ 5 minutes, 32 videos with 160 queries, LLaVA-OV-7B, 1 FPS). Results show identical performance (79.38\%), confirming equivalence.

\textbf{Spatial resolution}
In contrast to uniform streaming caches, which must operate at a fixed resolution throughout, the decoupling enables additional processing on features of recent inputs, such as increasing the spatial resolution of recent frames, without affecting the cumulative past cache construction and update efficiency. In \cref{tab:spa}, we study the impact of increasing the resolution of the latest frame and show that incorporating finer visual details, even for a single frame, can yield further performance gains.
Notably, we observe consistent performance improvements even without any resolution increase, confirming that the gains arise from decoupling the construction of the cumulative and instant caches.

\begin{table}[ht!]
\caption{Ablations on spatial resolution. }
\centering
\small
\setlength{\tabcolsep}{1.3mm}{
\renewcommand{\arraystretch}{1.10}
\begin{tabular}{c|c|c|cc}
\hline
Base Model & Method & {Spatial} & {StreamingBench} & {OVOBench} \\
\hline
% streaming & 76.92 & 78.56
\multirow{4}{*}{\textbf{Llava-OV-7B}} & \textcolor{gray}{Uniform} & \textcolor{gray}{-} & \textcolor{gray}{76.92} & \textcolor{gray}{78.56} \\
\cline{2-5}
    & \multirow{3}{*}{\dscache} & - & 78.88 & 81.32   \\
    & & $\times$ 1.5 & 78.98 & 82.03  \\
    & & $\times$ 2.0 & 79.12 & 82.32  \\
\hline
% streaming & 55.6 & 52.2
\multirow{4}{*}{\textbf{Qwen2.5-VL-7B}} & \textcolor{gray}{Uniform} & \textcolor{gray}{-} & \textcolor{gray}{55.60} & \textcolor{gray}{52.21} \\
\cline{2-5}
    & \multirow{3}{*}{\dscache} & - & 56.68 & 54.33 \\
    & & $\times$ 1.5 & 57.02 & 54.95  \\
    & & $\times$ 2.0 & 57.52 & 55.40  \\
\hline
\end{tabular}
}
\label{tab:spa}
\end{table}

\section{Latency Overhead and Approximate Inference}
In runtime analysis, we report latency at query time, \ie per-query latency, when a response is generated and latency is most relevant; the overhead for non-query frames is negligible and comparable to the uniform cache baseline. 
Besides, we adopt cache eviction to maintain a fixed-size cache, bounded by a predefined context window (buffer size + cumulative past cache); so per-query latency and memory usage remains constant and does not scale with input length.

Latency mainly comes from instant cache construction at query frames. \cref{tab:abl_buffer} shows larger buffers increase latency. In our setting, a small buffer (\eg 4 frames) achieves high performance over uniform cache (79.12 vs. 76.92) while keeping latency manageable (0.30s vs. 0.21s).

\begin{table}[htb]
\centering
\caption{Buffer size ablation (LLaVA-OV-7B, StreamingBench).}
\label{tab:abl_buffer}
\small
\begin{tabular}{ccc}
\toprule
Buffer Size (\#frames) & Latency & Accuracy \\
\midrule
2  & 0.26 & 78.74 \\
4  & 0.30 & 79.12 \\
8  & 0.35 & 79.08 \\
12 & 0.39 & 79.24 \\
\bottomrule
\end{tabular}
\end{table}

To improve efficiency, we further propose an approximate design that avoids fully recomputing the instant cache at each query. It maintains an extra periodically refreshed cache for recent frames, enabling reuse of buffer-frame caches while preventing residual accumulation. \cref{tab:approx_cache} shows that this largely reduces latency, bringing it close to the uniform baseline. Latency also remains stable as buffer size increases(0.22s up to a 16-frame buffer).  However, this approximation introduces a mild trade-off: periodic resets vary context across frames, slightly affecting performance.

\begin{table}[t]
\centering
\caption{Instant cache approximation (LLaVA-OV-7B, StreamingBench, 8-frame buffer). Larger periods lead to greater performance degradation due to residual accumulation.}
\label{tab:approx_cache}
\small
\begin{tabular}{lccc}
\toprule
Method & Period (s) & Latency (s) & Accuracy \\
\midrule
Uniform Caching & -- & 0.21 & 76.92 \\
DSCache         & -- & 0.35 & 79.08 \\
\midrule
\multirow{3}{*}{DSCache (approx.)}
                & $4$  & 0.22 & 78.86 \\
                & $8$  & 0.23 & 78.98 \\
                & $16$ & 0.22 & 78.48 \\
\bottomrule
\end{tabular}
\end{table}

\section{Qualitative Results}
\cref{fig:svis_s} presents additional qualitative results on OVO-Bench, illustrating the benefits of maintaining informative recent context. The streaming baseline is distracted by earlier context and produces hallucinated outputs due to the uniform cumulative KV caching. 

\begin{figure*}[tbh]
    \centering
    \begin{subfigure}[t]{1.00\linewidth}
        \centering
        \includegraphics[width=\linewidth]{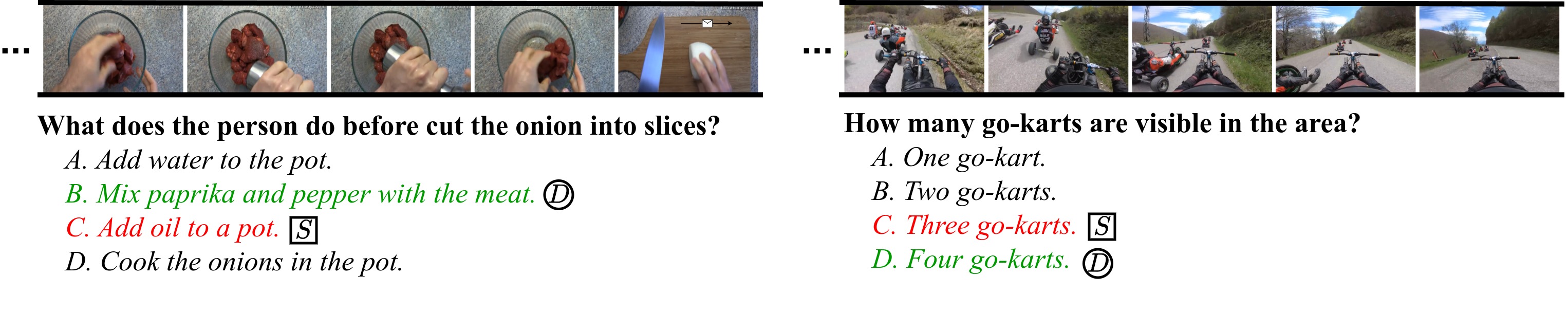}
        \vspace{-8.5mm}
        %\caption{Successful cases on OVO-Bench }  
        %\label{fig:svis_s}
    \end{subfigure}
    \caption{Successful cases on OVO-Bench. Text in red denotes the false predictions, while text in green represents the ground truth. $\squared{S}$ - Streaming baseline. $\circled{D}$ - \dscache.}
    \label{fig:svis_s}
    \vspace{-3mm}
\end{figure*}